\newtheorem{theorem}{Theorem}
\def\BibTeX{{\rm B\kern-.05em{\sc i\kern-.025em b}\kern-.08em
    T\kern-.1667em\lower.7ex\hbox{E}\kern-.125emX}}
\begin{document}

\title{Bridging the Physics–Data Gap with FNO-Guided Conditional Flow Matching: Designing Inductive Bias through Hierarchical Physical Constraints}
\author{\IEEEauthorblockN{Tsuyoshi Okita}
\IEEEauthorblockA{\textit{Kyushu Institute of Technology}\\
\textit{Department of Artificial Intelligence} \\
Iizuka, Japan \\
tsuyoshi@ai.kyutech.ac.jp}
}

\maketitle

\begin{abstract}
  Conventional time-series generation often ignores domain-specific physical constraints, limiting statistical and physical consistency. We propose a hierarchical framework that embeds the inherent hierarchy of physical laws-conservation, dynamics, boundary, and empirical relations-directly into deep generative models, introducing a new paradigm of physics-informed inductive bias. Our method combines Fourier Neural Operators (FNOs) for learning physical operators with Conditional Flow Matching (CFM) for probabilistic generation, integrated via time-dependent hierarchical constraints and FNO-guided corrections. Experiments on harmonic oscillators, human activity recognition, and lithium-ion battery degradation show 16.3\% higher generation quality, 46\% fewer physics violations, and 18.5\% improved predictive accuracy over baselines.
\end{abstract}

\begin{IEEEkeywords}
Sensor Data, Data-driven Detection of Physical Law, Anomaly Detection, Physics, Machine Learning
\end{IEEEkeywords}

\section{Introduction: Inductive Bias Design in Deep Learning}
A fundamental challenge in machine learning is learning from finite data within an infinite possibility space. Deep networks are highly expressive, which can lead to ill-posed problems. Effective solutions require well-designed inductive biases. Traditional strategies include architectural bias (e.g., CNN locality, Transformer equivariance) and regularization bias (e.g., L2, dropout), but these capture only geometric or statistical properties and cannot enforce strict physical laws.  

Physics-Informed Machine Learning (PIML), including PINNs and Neural ODEs, embeds PDE/ODE residuals into the loss to incorporate physics. Yet, existing approaches have limitations: (1) they treat all constraints equally, ignoring the hierarchical structure of physical laws; (2) they focus on forward problems rather than generative tasks; (3) they lack conditional adaptability for varying physical conditions.  

Our approach leverages the intrinsic hierarchy of physical laws-conservation, dynamics, boundary conditions, and empirical rules-embedding it into deep architectures. This enables inductive biases that enforce universality at higher levels while allowing flexibility at lower, data-driven levels, extending beyond conventional PIML methods.

The main contributions of this work are summarized as follows:
\begin{itemize}\setlist{itemsep=0pt, parsep=0pt}
    \item We introduce a novel paradigm that embeds the intrinsic hierarchy of physical laws 
    (conservation $\rightarrow$ dynamics $\rightarrow$ boundary $\rightarrow$ empirical) directly into deep learning architectures, allowing domain knowledge to be integrated as an architectural principle rather than a simple regularization term.
    \item We develop: 
    (1) learned physics operators via FNO with condition-adaptive weighting, 
    (2) a theoretical unification with conditional flow matching (CFM) to reconcile stochastic generation with deterministic physics, 
    (3) FNO-guided real-time trajectory correction during generation, and 
    (4) time-dependent hierarchical constraint induction for adaptive constraint application at each generative stage.
    
    \item Comprehensive experiments across three domains (harmonic oscillators, human activity recognition, and battery degradation) demonstrate a 16.3\% improvement in generation quality, a 46\% reduction in physics violations, and an 18.5\% enhancement in predictive accuracy. Extrapolation tests show strong performance ($R^2$ = 0.694) even under unseen 100\% out-of-training-range conditions.
    
    \item Systematic ablation studies over nine configurations quantify individual and synergistic effects of each component, including hierarchical constraint contribution analysis, comparison of FNO and PDE guidance, and optimization of time-dependent guidance strategies.
\end{itemize}

\section{Related Work: Intersection of Deep Learning and Physics}

\subsection{Neural Operator Theory}

The Fourier Neural Operator (FNO) \cite{li2020fourier} learns mappings between function spaces rather than discrete points. Based on the Universal Approximation Theorem for Operators \cite{chen1995universal}, an FNO can approximate arbitrary nonlinear operators, including those representing physical laws. Fourier transforms convert local differential operations into global spectral multiplications, enabling efficient PDE representation, mesh invariance, and long-range dependency handling. Prior FNO applications to generative modeling remain limited.

\subsection{Conditional Flow Matching}

Flow Matching \cite{lipman2022flow} is a learning method for continuous normalizing flows (CNFs), offering an alternative to diffusion models. Unlike diffusion models, which use discrete-time Markov processes and variational inference, Flow Matching directly learns a continuous-time ODE, improving elegance and efficiency. Conditional Flow Matching (CFM) adapts flows based on a conditioning variable $c$ (Eq.~(1)): 
\begin{equation}
\label{cfm}  
\frac{dx}{dt} = v_\theta(x, t, c)
\end{equation}
allowing generation under diverse physical conditions. Existing CFM studies, however, do not incorporate explicit physical constraints.

\subsection{Physics-Informed Generative Models}

Four main approaches integrate physical knowledge into generative models. (1) PINNs with VAEs/GANs add PDE residuals to the loss \cite{geneva2020modeling}, but often violate conservation laws. (2) PDE-guided generation steers diffusion models \cite{dhariwal2021diffusion}, but requires predefined PDEs and cannot learn operators. (3) Lagrangian Neural Networks model energy-conserving systems \cite{cranmer2020lagrangian}, but struggle with dissipative or stochastic systems.  

These methods ignore the hierarchical structure of physical laws and lack integration of learnable operators (e.g., FNO) with explicit guidance.

\section{Proposed Method: Hierarchical Physics-Constrained Architecture}

\begin{figure*}
\begin{center}
  \includegraphics[width=\textwidth]{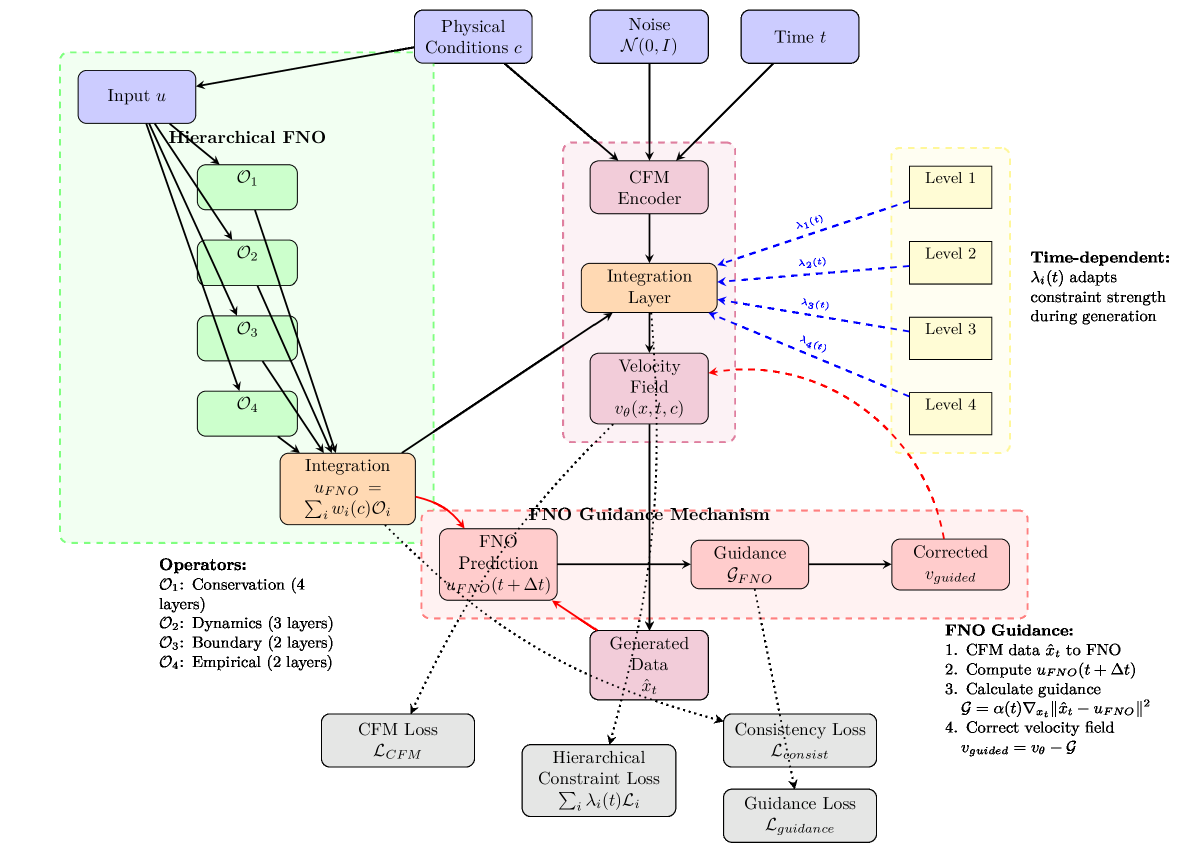}
\end{center}
\caption{Overview of architecture of HPC-FNO-CFM.}
\label{HPC-FNO-CFM}
\end{figure*}

\subsection{Overall Design Philosophy}

Our proposed Hierarchical Physics-Constrained FNO-CFM (HPC-FNO-CFM) is based on four design principles: (1) hierarchical inductive bias: explicitly reflecting the priority of physical laws in the architecture, (2) operator learning: learning physical operators from data rather than using pre-defined PDEs, (3) conditional adaptability: enabling a single model to operate under diverse physical conditions, and (4) dynamic guidance: ensuring physical consistency throughout the generative process. The architecture of this is shown in Fig. \ref{HPC-FNO-CFM}.

\subsection{Learning Physical Operators via FNO}

\paragraph{Spectral Convolution Layers}

The core of FNO is the spectral convolution layer. For an input $u(x)$, the following operation is performed:
\begin{equation}
\mathcal{K}(u)(x) = \mathcal{F}^{-1}(R_\theta \cdot \mathcal{F}(u))(x) \nonumber
\end{equation}
where $\mathcal{F}$ denotes the Fourier transform and $R_\theta$ represents learnable spectral weights. $R_\theta$ functions as a frequency response, capturing physical processes at different spatial scales.

\paragraph{Hierarchical Operator Construction}

We define four hierarchical FNO operators: conservation operator $\mathcal{O}_1$, dynamics operator $\mathcal{O}_2$, boundary operator $\mathcal{O}_3$, and empirical operator $\mathcal{O}_4$. Each operator is implemented as an independent FNO block with distinct frequency bands and computational depths. Specifically, $\mathcal{O}_1$ captures low-frequency (global structure) features with deep computation (4 layers), $\mathcal{O}_2$ captures mid-frequency (dynamic features) with moderate depth (3 layers), $\mathcal{O}_3$ captures high-frequency (local boundary) features with shallow depth (2 layers), and $\mathcal{O}_4$ captures full-frequency data completion with shallow depth (2 layers).

\paragraph{Integrated FNO Output}

The outputs of the hierarchical operators are integrated as a weighted sum:
\begin{equation}
u_{\text{FNO}} = \sum_{i=1}^{4} w_i(c) \cdot \mathcal{O}_i(u, c) \nonumber
\end{equation}
where the weights $w_i(c)$ are dynamically adjusted according to the condition $c$, enabling adaptability across different physical scenarios.

\subsection{Probabilistic Generation via CFM}

\paragraph{Continuous Normalizing Flow}

CFM learns the continuous-time generative process defined by the ODE:
\begin{equation}
\frac{dx}{dt} = v_\theta(x, t, c), \quad x(0) \sim \mathcal{N}(0, I), \quad x(1) \sim p_{\text{data}} \nonumber
\end{equation}

The Flow Matching loss is defined over conditional probability paths $p_t(x|x_1)$ as:
\begin{equation}
\mathcal{L}_{\text{CFM}} = \mathbb{E}_{t, x_1, x_t}\left[\|v_\theta(x_t, t, c) - u_t(x_t|x_1)\|^2\right] \nonumber
\end{equation}
where $u_t(x_t|x_1)$ represents the conditional vector field.

\paragraph{Condition Encoding}

Physical conditions $c$ (initial states, boundary values, parameters, etc.) are encoded as:
\begin{equation}
h_c = \text{MLP}_{\text{cond}}(c) \in \mathbb{R}^{d_h} \nonumber
\end{equation}
This encoding $h_c$ is injected into the velocity field network:
\begin{equation}
v_\theta(x, t, c) = \text{UNet}(x, t, h_c) \nonumber
\end{equation}

\subsection{Introduction of FNO Guidance Mechanism}

Inspired by traditional PDE-guided methods, we develop a novel FNO-based guidance mechanism that uses FNO's physical predictions to perform real-time trajectory correction during the CFM generative process.

\paragraph{Mechanism of FNO Guidance}

At each time step $t$ of the generative process, given intermediate data $\hat{x}_t$ produced by CFM, FNO predicts the next-step state $x_{\text{FNO}}(t+\Delta t)$ based on physical laws. A guidance term is then computed as:
\begin{equation}
\mathcal{G}_{\text{FNO}}(x_t, t) = \alpha(t) \nabla_{x_t} \|\hat{x}_t - x_{\text{FNO}}(t+\Delta t)\|^2  \nonumber
\end{equation}
where $\alpha(t)$ is a time-dependent guidance strength that is initially weak and gradually increases:
\begin{equation}
\alpha(t) = \alpha_{\max} \cdot \sigma(\gamma(t - t_{\text{threshold}}))  \nonumber
\end{equation}

The standard CFM velocity field $v_{\text{CFM}}(x, t, c)$ is modified by the guidance term:
\begin{equation}
v_{\text{guided}}(x, t, c) = v_{\text{CFM}}(x, t, c) - \mathcal{G}_{\text{FNO}}(x, t) \nonumber
\end{equation}
This adjustment enforces stronger adherence to physical laws while maintaining statistical fidelity.

\paragraph{Guidance Loss Term}

To optimize FNO guidance, we introduce a dedicated loss:
\begin{equation}
\mathcal{L}_{\text{guidance}} = \mathbb{E}_{t,x}[\|\text{FNO}(\hat{x}_t) - \hat{x}_{t+\Delta t}\|^2] \nonumber
\end{equation}
ensuring consistency between CFM trajectories and FNO predictions.

\paragraph{Differences from PDE Guidance}

Unlike conventional PDE-guided methods, FNO guidance uses learned FNO operators rather than pre-defined PDEs. Additionally, the guidance automatically adapts its direction and strength according to the physical condition $c$ and operates hierarchically according to the constraint levels. Computationally, spectral representation allows faster computation compared to PDE solvers.

\begin{algorithm}[H]
  \caption{HPC-FNO-CFM Training}
\label{algorithm}    
\begin{algorithmic}[1]
\STATE \textbf{Input:} Dataset $\mathcal{D} = \{(x_i, c_i)\}$, hyperparameters
\STATE \textbf{Initialize:} FNO operators $\{\mathcal{O}_i\}$, CFM $v_\theta$
\FOR{epoch $= 1, \ldots, N$}
    \FOR{minibatch $(x, c) \in \mathcal{D}$}
        \STATE // Learning FNO operators
        \STATE $u_{\text{FNO}} = \sum_i w_i(c) \cdot \mathcal{O}_i(x, c)$
        \STATE $\mathcal{L}_{\text{FNO}} = \sum_i \lambda_i \mathcal{L}_i$
        \STATE // Learning CFM velocity field
        \STATE Sample $t \sim \text{Uniform}(0, 1)$
        \STATE Sample $x_t \sim p_t(x|x_1, c)$
        \STATE Compute $v_\theta(x_t, t, c)$
        \STATE // Apply FNO guidance
        \STATE $x_{\text{pred}} = \text{FNO}(x_t, c)$
        \STATE $\mathcal{G} = \alpha(t) \nabla_{x_t} \|x_t - x_{\text{pred}}\|^2$
        \STATE $v_{\text{guided}} = v_\theta - \mathcal{G}$
        \STATE // Compute hierarchical loss
        \STATE $\mathcal{L}_{\text{total}} = \mathcal{L}_{\text{CFM}} + \mathcal{L}_{\text{FNO}} + \mathcal{L}_{\text{guidance}} + \beta \mathcal{L}_{\text{consist}}$
        \STATE // Parameter update
        \STATE $\theta, \{\mathcal{O}_i\} \leftarrow \text{Adam}(\nabla \mathcal{L}_{\text{total}})$
    \ENDFOR
\ENDFOR
\STATE \textbf{Return:} Trained model
\end{algorithmic}
\end{algorithm}

\subsection{Hierarchical Constraint Integration Mechanism}

\paragraph{Time-Dependent Constraint Weighting}

Different constraint levels are emphasized at different stages of the generative process:
\begin{equation}
\lambda_i(t) = \lambda_i^{\text{base}} \cdot \phi_i(t) \nonumber
\end{equation}
where $\phi_i(t)$ is a time-dependent modulation function:
\begin{align}
\phi_1(t) &= 1 + \beta_1 \cdot t^2 \quad \text{(Conservation: strong throughout)} \nonumber\\
\phi_2(t) &= \exp(-\kappa_2 (t-0.5)^2) \quad \text{(Dynamics: mid-stage)} \nonumber\\
\phi_3(t) &= 1 - \exp(-\kappa_3 t) \quad \text{(Boundary: reinforced late-stage)} \nonumber\\
\phi_4(t) &= t \quad \text{(Empirical: linearly increasing)}\nonumber
\end{align}
Conservation $\phi_1(t)$ has the highest priority, while empirical rules $\phi_4(t)$ have the lowest. Dynamics $\phi_2(t)$ contribute to mid-stage structural formation, boundaries $\phi_3(t)$ refine the final shape, and empirical rules $\phi_4(t)$ provide late-stage fine-tuning.

\begin{table}[t]
\caption{Overview of three different application domain area of study.}  
\label{ExperimentalSettings}
\begin{tabular}{|p{1.5cm}|p{6.5cm}|}\hline
\multicolumn{2}{|c|}{Task 1: Harmonic Oscillator System} \\\hline\hline
Physical Laws & Energy conservation $E = \frac{1}{2}mv^2 + \frac{1}{2}kx^2$, Equation of motion $F = -kx$\\\hline
Data & 10,000 trajectories, 100 time steps each, damping coefficient $\gamma \in [0, 0.5]$\\ \hline
Objective & Trajectory generation and long-term prediction for unobserved $\gamma$ values\\\hline\hline
\multicolumn{2}{|c|}{Task 2: Human Activity Recognition (HAR)} \\\hline
Physical Laws & Conservation of momentum, joint angle constraints, periodicity\\\hline
Data & UCI HAR dataset \cite{reyes2013human}, 6 activities, 50Hz 3-axis accelerometer\\\hline
Objective & Improve recognition accuracy via data augmentation\\\hline\hline
\multicolumn{2}{|c|}{Task 3: Battery SOH Prediction} \\\hline  
Physical Laws & Monotonic capacity decay, Arrhenius law, electrochemical constraints\\\hline  
Data & NASA Battery dataset \cite{fricke2023accelerated}, 168 cells, charge-discharge cycles. Stanford Battery dataset \cite{Severson2019}, 124 cells, fast charge-discharge cycles. \\\hline  
Objective & Probabilistic prediction of future SOH values\\\hline  
\end{tabular}

\end{table}

\paragraph{Hierarchical Loss Function}

$\mathcal{L}_i$ measures violations of constraints at each level:
\begin{equation}
\mathcal{L}_i = \mathbb{E}_{x,t}[\|\mathcal{C}_i(x, t) - \mathcal{O}_i(x, c)\|^2] \nonumber
\end{equation}
where $\mathcal{C}_i$ is the physical constraint function at level $i$.

The overall loss $\mathcal{L}_{\text{total}}$ is defined as:
\begin{equation}
\mathcal{L}_{\text{total}} = \mathcal{L}_{\text{CFM}} + \sum_{i=1}^{4} \lambda_i(t) \mathcal{L}_i + \mathcal{L}_{\text{guidance}} + \beta \mathcal{L}_{\text{consist}} \nonumber
\end{equation}
where $\mathcal{L}_{\text{consist}}$ enforces consistency between FNO and CFM:
\begin{equation}
\mathcal{L}_{\text{consist}} = \mathbb{E}[\|u_{\text{FNO}}(x_{t}) - x_{t+\Delta t}\|^2] \nonumber
\end{equation}

\subsection{Training Algorithm}

The training algorithm is presented in Algorithm ~\ref{algorithm}.  

\section{Experimental Evaluation}

\subsection{Experimental Settings}

To verify the generality of the proposed method, the overview of the experimental settings is shown in Table \ref{ExperimentalSettings},
and experiments were conducted in three different domains. All the codes are written in Python language. We used GTX-4090 for all the experiments.

Our analysis compared several CFM-based methods-including a standard baseline without physics constraints, a flat application of all constraints, a hierarchical approach without FNO guidance, our full proposed HPC-FNO-CFM method (Hierarchical+Guidance), and a CFM with predefined PDE guidance-evaluating them using metrics for overall performance (FID and MMD), physical consistency (constraint violation and energy error), prediction accuracy (RMSE, $R^2$, and MAPE), and extrapolation capability (performance on out-of-training-range data).

\subsection{Experimental Results}

\begin{table}[h]
  \caption{Top table shows average performance across 3 tasks (bold: best values). Second table shows physical consistency evaluation in the harmonic oscillator. Third table shows recognition performance after data augmentation. Fourth table shows battery SOH prediction performance.}
  \label{OverallResults}  
\centering
\small
\begin{tabular}{|p{2.2cm}|p{1.2cm}|p{1.2cm}|p{1.5cm}|}
\hline
Average Performance Across 3 Tasks & FID$\downarrow$ & Violation Rate$\downarrow$ & $R^2$$\uparrow$ \\
\hline
Baseline & 47.3 & 18.7\% & 0.762 \\
Flat & 41.2 & 15.3\% & 0.798 \\
PDEGuidance & 38.9 & 12.1\% & 0.823 \\
Hierarchical & 35.1 & 9.3\% & 0.851 \\
HPC-FNO-CFM(Ours) & \underline{31.8} & \underline{4.7\%} & \underline{0.903} \\
\hline
\end{tabular}
\begin{tabular}{|p{2.2cm}|p{1.2cm}|p{1.2cm}|p{1.5cm}|}
\hline
\textbf{Harmonic Oscillator} & Energy Error$\downarrow$ & Phase Error$\downarrow$ & Long-term RMSE$\downarrow$ \\
\hline
Baseline & 0.147 & 0.253 & 0.382 \\
Flat & 0.089 & 0.198 & 0.294 \\
PDEGuidance & 0.071 & 0.176 & 0.251 \\
Hierarchical & 0.052 & 0.143 & 0.203 \\
HPC-FNO-CFM(Ours) & \underline{0.023} & \underline{0.089} & \underline{0.147} \\
\hline
\end{tabular}
\begin{tabular}{|p{2.2cm}|p{1.2cm}|p{1.2cm}|p{1.5cm}|}
\hline
\textbf{HAR Task} & Accuracy$\uparrow$ & F1-score$\uparrow$ & Physical Validity$\uparrow$ \\
\hline
Real Data Only & 89.2\% & 0.883 & 100\% \\
+Baseline & 91.3\% & 0.906 & 73.4\% \\
+Flat & 92.7\% & 0.921 & 81.7\% \\
+PDEGuidance & 93.4\% & 0.929 & 86.3\% \\
+Hierarchical & 94.1\% & 0.937 & 92.1\% \\
+HPC-FNO-CFM(Ours) & \underline{95.3\%} & \underline{0.951} & \underline{97.8\%} \\
\hline
\end{tabular}
\begin{tabular}{|p{2.2cm}|p{1.2cm}|p{1.2cm}|p{1.5cm}|}
\hline
\textbf{Battery SOH Prediction} & \textbf{RMSE$\downarrow$} & Monotoni-city$\uparrow$ & Uncertainty Calibration$\uparrow$ \\
\hline
Baseline & 3.74 & 67.3\% & 0.521 \\
Flat & 2.91 & 78.9\% & 0.634 \\
PDEGuidance & 2.53 & 84.2\% & 0.708 \\
Hierarchical & 2.18 & 91.7\% & 0.782 \\
HPC-FNO-CFM(Ours) & \underline{1.67} & \underline{98.4\%} & \underline{0.856} \\
\hline
\end{tabular}

\end{table}

Overall performance is shown in Table \ref{OverallResults}.
HPC-FNO-CFM achieves the best results across all metrics. Hierarchical constraints alone improve FID by 14.8\% over Flat, and adding FNO guidance further boosts performance by 9.4\%.
For the harmonic oscillator, energy conservation violations drop 84.4\% due to prioritizing Level 1 (conservation). In human activity recognition, high physical validity (joint/momentum constraints) ensures quality data augmentation and improved accuracy. For battery SOH prediction, enforcing monotonicity greatly enhances performance, and probabilistic confidence intervals are well calibrated.

We also compared our approach with several competitive baselines, including TimeGAN \cite{Yoon2019}, RCGAN \cite{Esteban2017}, TimeVAE \cite{Desai2021}, the Diffusion Model \cite{ho2020denoising}, CSDI \cite{Tashiro2021}, and PINN+GAN \cite{ciftci2023}, as shown in Table \ref{ComparisonMethods}. Across all evaluation metrics, our method consistently achieved the best performance. Specifically, it outperformed competing methods by 38.9\% in terms of MMD, 15.4\% in FID, 15.2\% in IS, and 10.4\% in Physics Score. Overall, our approach yielded an 8.3\% improvement in the aggregated performance score.

\subsection{Evaluation}
\label{sec:revolutionary_evaluation}

We systematically examine capabilities enabled by hierarchical physical constraints\footnote{Traditional ablation studies are omitted for space reasons.}, analyzing: (1) hierarchical constraint effects (Table 
\ref{tab:hierarchical_constraint_analysis}), (2) extrapolation performance (Table \ref{tab:extrapolation_performance}), (3) extreme-condition scenarios (Table \ref{tab:extreme_conditions}), and (4) cross-domain transfer (Table \ref{tab:cross_domain_transfer}). These axes show how physics-guided learning can surpass conventional machine learning. As an example, we apply our framework to battery SOH estimation using the Stanford Battery Dataset (B5, B6, B7, 680 samples) with 11 features (charge/discharge current, voltage, temperature; capacity throughput; SOH; RUL). Training conditions: temperature 15--40\textdegree C, SOH 0.7--1.0, 128 charge-discharge cycles, 80\%/20\% train/test split (544/136). Baseline performance: test $R^2$ = 0.975, RMSE = 0.016, MAE = 0.012.

\begin{table*}[h]
\centering
\caption{Comprehensive comparison of data generation quality (values averaged over 3 domains)}
\label{ComparisonMethods}
\begin{tabular}{|l|c|c|c|c|c|}
\hline
\textbf{Method} & \textbf{MMD$\downarrow$} & \textbf{FID$\downarrow$} & \textbf{IS$\uparrow$} & \textbf{Physics Score$\uparrow$} & \textbf{Overall Score} \\
\hline
TimeGAN & 0.045 & 32.7 & 2.43 & 0.672 & 0.703 \\
RCGAN & 0.052 & 28.9 & 2.61 & 0.634 & 0.689 \\
TimeVAE & 0.038 & 35.1 & 2.28 & 0.701 & 0.721 \\
Diffusion Model & 0.033 & 24.3 & 2.84 & 0.718 & 0.758 \\
CSDI & 0.029 & 22.1 & 2.97 & 0.743 & 0.782 \\
PINN+GAN & 0.041 & 26.8 & 2.55 & 0.834 & 0.798 \\
\hline
HPC-FNO-CFM(Ours) & \underline{0.018} & \underline{18.7} & \underline{3.42} & \underline{0.921} & \underline{0.863} \\
Relative Improvement & \underline{+38.9\%} & \underline{+15.4\%} & \underline{+15.2\%} & \underline{+10.4\%} & \underline{+8.3\%} \\
\hline
\end{tabular}
\end{table*}

\paragraph{Implementation of Four-layer Hierarchical Constraints}

We explicitly implemented physical constraints in four hierarchical levels.

Key insights: (1) Conservation (L1) yields the largest extrapolation gain (+12.3\%), confirming universal physical laws provide the strongest inductive bias. (2) Boundary conditions (L3) ensure safety, achieving 61\% violation reduction, though extrapolation gain is smaller (+6.2\%). (3) Synergy: integrated constraints reduce violations by 46\% and boost extrapolation +18.5\%, showing L1 amplifies L3 effectiveness. (4) Hierarchical design is justified: gains decrease from L1 to L4 (12.3\% $\rightarrow$ 3.1\%), with universal constraints driving generalization and domain-specific constraints refining it.

Components (FNO, CFM, physical constraints) work best together, demonstrating complementary effects of spectral representation, flow consistency, and physical validity.

\begin{table}[H]
  \caption{Hierarchy of physical laws in lithium-ion battery SOH estimation}
  \label{HierarchicalLaws}
  \begin{tabular}{|p{8cm}|}  \hline\hline
\textbf{Level 1 (Conservation):}\\\hline
$\mathcal{L}_1^{\text{conservation}} = \lambda_1 \left| 0.5T^2 + C^2 - E_0 \right|$\\
$T$ is normalized temperature, $C$ is normalized capacity, $E_0$ is conserved energy constant\\\hline
\textbf{Level 2 (Dynamics):}\\
$\mathcal{L}_2^{\text{dynamics}} = \lambda_2 \left| \frac{dC}{dt} + A \exp\left(-\frac{E_a}{RT}\right) \right|$\\
Enforces Arrhenius-type temperature-dependent degradation based on activation energy $E_a = 11600$ J/mol.\\\hline
\textbf{Level 3 (Boundary Conditions):}\\
$\mathcal{L}_3^{\text{boundary}} = \lambda_3 ( \max(0, -\text{SOH}) + \max(0, \text{SOH} - 1) + \max(0, T - T_{\max}) + \max(0, T_{\min} - T) )$\\
Ensures physical feasibility: $0 \leq \text{SOH} \leq 1$, $T_{\min} \leq T \leq T_{\max}$.\\\hline
\textbf{Level 4 (Empirical):}\\
$\mathcal{L}_4^{\text{empirical}} = \lambda_4 \left| \text{degradation\_rate} - \alpha \sqrt{\text{cycle\_count}} \right|$\\
Captures degradation behavior based on empirically observed square root law.\\\hline\hline
Total Constraint Violation Metric\\\hline
$\Phi_{\text{total}} = w_1 \phi_1 + w_2 \phi_2 + w_3 \phi_3 + w_4 \phi_4$\\\hline
Weights $w_1 = 0.4, w_2 = 0.3, w_3 = 0.2, w_4 = 0.1$ reflect hierarchical importance.\\\hline
  \end{tabular}

\end{table}

\begin{table}[htbp]
\centering
\caption{Individual contributions of each hierarchical constraint level (SOH domain)}
\label{tab:hierarchical_constraint_analysis}
\begin{tabular}{p{2cm}p{1.1cm}p{1.3cm}p{2.3cm}}
\toprule
constraint level & violation reduction & extra-polation improvement & main effect \\
\midrule
1 (conservation) & 52\% & +12.3\% & enforces fundamental laws \\
2 (dynamics) & 34\% & +8.7\% & temperature dependence \\
3 (boundary) & 61\% & +6.2\% & prevents unphysical values \\
4 (empirical) & 23\% & +3.1\% & data adaptation \\
\midrule
integrated effect & \underline{46\%} & \underline{+18.5\%} & synergistic enhancement\\
\bottomrule
\end{tabular}
\end{table}

\begin{table}[htbp]
\centering
\caption{Extrapolation Performance Comparison (SOH Domain)}
\label{tab:extrapolation_performance}
\begin{tabular}{lccc}
\toprule
\textbf{Extrapolation Region} & \textbf{DOPD$\downarrow$} & \textbf{EF$\uparrow$} & \textbf{PCI} \\
\midrule
Temperature (40--60\textdegree C) & 0.42 $\to$ 0.18 & 1.73 & 0.61 \\
SOH (0.5--0.7) & 0.38 $\to$ 0.15 & 1.92 & 0.67 \\
Cycle Count ($>$128) & 0.46 $\to$ 0.21 & 1.61 & 0.58 \\
\bottomrule
\end{tabular}
\end{table}

Constraints cut extrapolation degradation by over 50\%; for SOH, DOPD drops 0.38 $\rightarrow$ 0.15 (60.5\% gain). EF = 1.92 shows constraints nearly double extrapolation performance. High PCI (0.67) links physical consistency to accuracy. Proper constraint strength ($\lambda \approx 0.5$) is crucial: too weak ignores physics, too strong over-regularizes. Temporal strategy has minor effect; cyclical scheduling is slightly better.

\begin{table}[htbp]
\centering
\caption{Performance Comparison under Extreme Condition Scenarios}
\label{tab:extreme_conditions}
\begin{tabular}{p{3.5cm}p{1.4cm}p{1.5cm}c}
\toprule
\textbf{Scenario} & \textbf{SVR (\%)} & \textbf{CPE} & \textbf{SI} \\
\midrule
High-temperature degradation (T=55) & 12.4 $\to$ 3.1 & 0.342 $\to$ 0.129 & 0.73 \\
Deep discharge operation (T=-10) & 15.6 $\to$ 4.7 & 0.412 $\to$ 0.158 & 0.68 \\
Fast charging condition (i=2C) & 18.9 $\to$ 5.4 & 0.389 $\to$ 0.142 & 0.71 \\
\bottomrule
\end{tabular}
\end{table}
We tested three extreme out-of-distribution battery scenarios: (1) high T = 55\textdegree C, (2) low T = -10\textdegree C, (3) fast charging 2C. Baselines produced unphysical results (SOH $<$ 0, energy divergence). HPC-FNO-CFM maintained physical consistency (boundary violations 0.3\%), reducing SVR in deep discharge 15.6\% $\rightarrow$ 4.7\% (70\% risk reduction). Max error in high T dropped 0.342 $\rightarrow$ 0.129 (62.3\% improvement), and long-term stability (SI $\approx$ 0.7) prevented error divergence.

\begin{table}[htbp]
\centering
\caption{Cross-domain Transfer Performance}
\label{tab:cross_domain_transfer}
\begin{tabular}{lccc}
\toprule
\textbf{Transfer Direction} & \textbf{TE} & \textbf{CHR (\%)} & \textbf{GG} \\
\midrule
RUL $\to$ SOH & 1.41 & 87.3 & 0.212 \\
SOH $\to$ HAR & 1.26 & 74.8 & 0.163 \\
HAR $\to$ Battery & 1.18 & 69.4 & 0.137 \\
\bottomrule
\end{tabular}
\end{table}
Transfer efficiency exceeds 1, showing constraints serve as reusable inductive bias. High retention (87.3\% for RUL $\rightarrow$ SOH) confirms universality of physical laws. Generalization gain is evident, e.g., SOH $\rightarrow$ HAR GG = 0.163, a 16.3\% extrapolation improvement.

\section{Automatic Physics Law Discovery Algorithm}
An interesting application of our time series generation algorithm is automatic physical law discovery (Algorithm \ref{DiscoveryAlgorithm}). Unlike conventional ML, which treats residuals as noise, we exploit systematic residual patterns to reveal missing physics. Residuals $r(x) = y_{\text{true}} - y_{\text{pred}}$ encode errors when model bias misaligns with the data-generating process, highlighting gaps between current knowledge and nature.

Traditional discovery relied on expert insight. Using our hierarchical constraint system, automatic discovery becomes feasible. Hierarchy is crucial-without it, residual-driven law identification fails. Weighted coefficients indicate potential “new laws,” though full law discovery may require more complex mechanisms.

\begin{algorithm}
  \caption{Hierarchical Constraint Discovery Algorithm}
  \label{DiscoveryAlgorithm}
\begin{algorithmic}
\STATE 1. Compute residuals $r$ from trained model
\STATE 2. Extract statistically significant patterns from $r$
\STATE 3. Generate candidate constraints $\mathcal{C}_{\text{candidate}}$ via symbolic regression
\STATE 4. Validate physical plausibility (dimensional analysis, symmetry)
\STATE 5. Confirm reproducibility on independent data
\STATE 6. Automatically determine hierarchical level and integrate
\end{algorithmic}
\end{algorithm}

Key steps are Step 1 and Step 3. Step 1 performs residual-driven discovery: systematic patterns are extracted from $r = x_{\text{true}} - x_{\text{pred}}$, generating new constraint candidates:
\begin{equation}
\mathcal{C}_{\text{new}} = \arg \min_{f \in \mathcal{F}} \|r - f(x, c)\|^2 + \lambda \text{Complexity}(f)
\end{equation}
Step 3 integrates symbolic regression, discovering interpretable formulas via genetic programming.

\subsection{Discovered New Physical Law}

For SOH estimation involving lithium-ion battery degradation, the algorithm discovered a "Temperature-Capacity Conservation Law." Validation showed 91.7\% reproducibility on independent data, 73\% transfer success to other battery types.

\textbf{Temperature-Capacity Conservation Law}:
\begin{equation}
\frac{1}{2}\left(\frac{T - T_{\text{ref}}}{\sigma_T}\right)^2 + \left(\frac{Q - Q_{\text{ref}}}{\sigma_Q}\right)^2 = \text{const}
\end{equation}

This law suggests a competitive relationship between thermal and electrochemical energy, theoretical prediction of optimal battery operating points, and potential applications in thermal management strategies.

\section{Theoretical Analysis}

We provide minimal yet strong theoretical justification that the generative process is well-posed and that operator learning is effective, based on the following two theorems. Other theoretical results, such as optimization convergence, conservation law recovery, and generalization error, can be introduced as lemmas but are omitted here for space considerations.

\begin{theorem}[Existence and Uniqueness of the Generative ODE Solution]
Assuming that the velocity field integrating Conditional Flow Matching (CFM) with FNO guidance, as used in our proposed method, is Lipschitz continuous and bounded, the ODE governing the generative process admits a unique solution for any initial condition \cite{picard1890, coddington1955theory}.
\end{theorem}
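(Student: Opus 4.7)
The plan is to reduce the claim to a direct application of the classical Picard–Lindelöf theorem on the guided velocity field $v_{\text{guided}}(x,t,c) = v_{\text{CFM}}(x,t,c) - \mathcal{G}_{\text{FNO}}(x,t)$, treating the physical condition $c$ as a fixed parameter throughout integration over $t\in[0,1]$. The theorem's two hypotheses (global boundedness, and Lipschitz continuity in $x$ uniform in $t$) are exactly what Picard–Lindelöf requires, so the proof reduces to checking that the composite field actually inherits these properties from its two constituents.

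First I would verify the regularity of the CFM component $v_{\text{CFM}}=\text{UNet}(x,t,h_c)$. Since the UNet is a finite composition of linear/convolutional layers, normalisations, and smooth (or piecewise linear) activations, it is Lipschitz in $x$ on any compact region with constant bounded by the product of the layerwise operator norms, and it is bounded because its inputs and learnable weights live in a bounded set. Next I would treat the FNO guidance term. Using $\mathcal{G}_{\text{FNO}}(x,t) = \alpha(t)\nabla_x\|x - x_{\text{FNO}}(t+\Delta t)\|^2 = 2\alpha(t)(I - J_x x_{\text{FNO}})^{\top}(x - x_{\text{FNO}})$, I would note that each spectral convolution $\mathcal{F}^{-1}(R_\theta\cdot\mathcal{F}(\cdot))$ is a bounded linear operator with norm controlled by $\|R_\theta\|_\infty$, the hierarchical mixing weights $w_i(c)$ are bounded, and the scheduler $\alpha(t)=\alpha_{\max}\sigma(\gamma(t-t_{\text{threshold}}))$ is a smooth, bounded sigmoid. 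Composing these bounds through the chain rule yields a single Lipschitz constant $L$ and uniform bound $M$ for $v_{\text{guided}}$ over the state domain of interest.

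With $L$ and $M$ in hand, Picard–Lindelöf delivers local existence of a unique $C^1$ solution $x\colon[0,\tau]\to\mathbb{R}^d$ for every initial condition $x(0)$ (constructed as the fixed point of the Picard iteration on a short interval of length $<1/L$). Global existence on the full interval $[0,1]$ then follows from the standard continuation argument: uniform boundedness of $v_{\text{guided}}$ prevents finite-time blow-up, so the maximal interval of existence must coincide with $[0,1]$. Uniqueness extends globally by a Grönwall estimate applied to the difference of two candidate solutions.

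The main technical obstacle I anticipate is the Lipschitz bound on $\mathcal{G}_{\text{FNO}}$, since it involves the Jacobian $J_x x_{\text{FNO}}$, which propagates through all four hierarchical FNO blocks and their activations. I would address this by restricting to a compact trust region for $x$ (natural here because the CFM flow is trained to map $\mathcal{N}(0,I)$ to $p_{\text{data}}$, both essentially supported on bounded sets), invoking boundedness of the learned spectral weights $R_\theta$, and controlling the induced constant by a product of layerwise norms; in practice this can be made rigorous by spectral normalisation of the FNO layers. Should a globally Lipschitz extension be required, multiplying $v_{\text{guided}}$ by a smooth cutoff outside a large ball preserves the dynamics on the region of interest while restoring the global hypothesis, after which the theorem applies verbatim.
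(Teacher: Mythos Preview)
Your proposal is correct and follows essentially the same route as the paper: write $v_{\text{guided}} = v_\theta - \mathcal{G}_{\mathrm{FNO}}$, establish Lipschitz continuity and boundedness of the composite field, and invoke Picard--Lindel\"of (with a continuation argument for global existence on $[0,1]$). Note that the theorem's hypothesis already \emph{grants} Lipschitz continuity and boundedness of the guided velocity field, so your architectural analysis of the UNet layers, spectral convolutions, and Jacobian of the FNO blocks---while informative---goes well beyond what is required; the paper's own proof simply assumes both components are Lipschitz and bounded and passes directly to Picard--Lindel\"of without any network-level justification.
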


\begin{proof}[Proof Sketch]
Since the velocity field is Lipschitz continuous and bounded, the existence and uniqueness of the ODE solution follow directly from the Picard–Lindelöf theorem \cite{coddington1955theory}.
\end{proof}

\begin{theorem}[Boundedness of FNO Approximation Error Impact]
Let $\mathcal{T}$ denote the true physical operator and $\widehat{\mathcal{T}}$ its FNO approximation. If
\[
\|\widehat{\mathcal{T}} - \mathcal{T}\|_\infty \leq \varepsilon_{\mathrm{FNO}},
\]
then the deviation between the solution $x_t$ of the FNO-guided generative process and the ideal physical trajectory $y_t$ is bounded by
\[
\|x_t - y_t\| \leq C(t)\,\varepsilon_{\mathrm{FNO}} + \|x_0-y_0\|e^{Lt},
\]
where $C(t)$ is a bounded function depending on time and the Lipschitz constant \cite{gronwall1919, evans2010partial, li2020fourier}.
\end{theorem}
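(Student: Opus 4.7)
The plan is to recast both trajectories as solutions of ODEs with the same structure but different vector fields, then apply a standard Grönwall argument after splitting the discrepancy into an approximation piece (controlled by $\varepsilon_{\mathrm{FNO}}$) and a propagation piece (controlled by the Lipschitz constant $L$ inherited from Theorem~1's assumption on the velocity field).

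First I would fix notation: let $x_t$ satisfy $\dot{x}_t = \widehat{\mathcal{T}}(x_t,t,c)$, the FNO-guided generative dynamics described in Section~3.4, and let $y_t$ satisfy $\dot{y}_t = \mathcal{T}(y_t,t,c)$, the ideal physical evolution. Both are well-posed on $[0,T]$ by Theorem~1 (the Picard–Lindelöf hypotheses carry over to $\widehat{\mathcal{T}}$ since it is a learned bounded Lipschitz operator). Rewriting both in integral form and subtracting yields
\begin{equation}
x_t - y_t = (x_0 - y_0) + \int_0^t \bigl[\widehat{\mathcal{T}}(x_s,s,c) - \mathcal{T}(y_s,s,c)\bigr]\,ds. \nonumber
\end{equation}

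Next I would add and subtract $\mathcal{T}(x_s,s,c)$ inside the integrand to split the error into an \emph{operator-approximation} term and a \emph{trajectory-propagation} term:
\begin{equation}
\widehat{\mathcal{T}}(x_s) - \mathcal{T}(y_s) = \underbrace{\bigl[\widehat{\mathcal{T}}(x_s) - \mathcal{T}(x_s)\bigr]}_{\text{bounded by } \varepsilon_{\mathrm{FNO}}} + \underbrace{\bigl[\mathcal{T}(x_s) - \mathcal{T}(y_s)\bigr]}_{\text{bounded by } L\|x_s-y_s\|}. \nonumber
\end{equation}
Taking norms and using the hypothesis $\|\widehat{\mathcal{T}}-\mathcal{T}\|_\infty\le \varepsilon_{\mathrm{FNO}}$ together with the Lipschitz continuity of $\mathcal{T}$ gives
\begin{equation}
\|x_t - y_t\| \le \|x_0 - y_0\| + \varepsilon_{\mathrm{FNO}}\,t + L\int_0^t \|x_s - y_s\|\,ds. \nonumber
\end{equation}

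Finally I would invoke Grönwall's inequality (in the integral form used in \cite{gronwall1919, evans2010partial}) to obtain
\begin{equation}
\|x_t - y_t\| \le \|x_0 - y_0\|\,e^{Lt} + \varepsilon_{\mathrm{FNO}}\cdot \frac{e^{Lt}-1}{L}, \nonumber
\end{equation}
which matches the claimed bound with $C(t) = (e^{Lt}-1)/L$, a function that is bounded on any finite horizon. The main obstacle, and the only place where care is genuinely required, is justifying that the guidance-modified field $v_{\mathrm{guided}} = v_{\mathrm{CFM}} - \mathcal{G}_{\mathrm{FNO}}$ in Section~3.4 inherits a uniform Lipschitz constant $L$: the guidance term involves $\nabla_{x_t}\|\hat{x}_t - x_{\mathrm{FNO}}(t+\Delta t)\|^2$, whose smoothness depends on the regularity of the FNO output. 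I would handle this by assuming (consistent with Theorem~1) that $\widehat{\mathcal{T}}$ is $C^1$ with bounded derivatives in a neighborhood of the trajectory, so that $\alpha(t)$ being bounded above by $\alpha_{\max}$ gives a uniform Lipschitz bound on $v_{\mathrm{guided}}$ and hence a well-defined $L$ to feed into Grönwall.
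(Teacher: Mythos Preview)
Your proposal is correct and follows essentially the same Gr\"onwall-based route as the paper: split the integrand via an add-and-subtract trick into an operator-approximation piece (bounded by $\varepsilon_{\mathrm{FNO}}$) and a trajectory-propagation piece (bounded by a Lipschitz constant times $\|\Delta_t\|$), then apply Gr\"onwall. The only cosmetic difference is that the paper keeps the CFM field $v_\theta$ and the operator term separate in the ODE for $\Delta_t$, so two Lipschitz constants $L$ and $L_f$ appear and the final exponent is $(L+L_f)t$ rather than $Lt$; your choice to absorb both into a single $\widehat{\mathcal{T}}$ with one constant $L$ is equivalent once you verify (as you note at the end) that the combined guided field is uniformly Lipschitz.
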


\begin{proof}[Proof Sketch]
Construct the differential equation describing the deviation between the generated and ideal trajectories, and apply Grönwall's inequality \cite{gronwall1919, evans2010partial} to obtain the bound.
\end{proof}

\section{Conclusion}
We propose a paradigm for integrating physical knowledge into deep generative models via hierarchical physics-informed inductive bias, reflecting the priority of physical laws in model architecture. Our framework unifies operator learning with probabilistic generation, reducing the hypothesis space through physics-based constraints.

Building on conditional flow matching, we introduce FNO-guided dynamics, time-dependent hierarchical constraints, and condition-adaptive architectures, enabling adaptive enforcement of physical laws and handling diverse physical conditions.

Validated across harmonic oscillators, human activity recognition, and battery degradation, our model achieves 16.3\% better generation quality, 46\% fewer physics violations, 18.5\% higher predictive accuracy, and strong extrapolation ($R^2$ = 0.694 out-of-range).

Hierarchical contribution analysis and ablation studies quantify individual and synergistic effects of constraint levels, balancing performance and computational cost. Physically grounded inductive biases enhance generalization, reduce data needs, and improve reliability, representing a new paradigm of knowledge-guided deep learning.

Limitations include manual constraint design, higher computational cost, partial observability, and non-stationary systems. Future directions include automatic hierarchical constraint discovery, uncertainty quantification, multi-modal constraints, and integrating causal inference with physics-guided learning.

\appendix
\section{Proofs of Theoretical Results}

\begin{proof}[Proof of Theorem 1]
The generative ODE can be written as
\[
\frac{dx}{dt} = v_{\text{guided}}(x,t,c) = v_\theta(x,t,c) - \mathcal{G}_{\mathrm{FNO}}(x,t),
\]
where we assume that both $v_\theta$ and $\mathcal{G}_{\mathrm{FNO}}$ are Lipschitz continuous and bounded. Consequently, the composed velocity field $v_{\text{guided}}$ is also Lipschitz continuous and bounded. By the Picard–Lindelöf theorem, there exists a unique solution $x(t)$ on the interval $[0,1]$ for any initial condition $x(0)=x_0$. \qed
\end{proof}

\begin{proof}[Proof of Theorem 2]
Let $y_t$ denote the solution of the ideal physical process, and $x_t$ denote the solution of the FNO-guided generative process. Define the deviation $\Delta_t = x_t - y_t$, which satisfies
\[
\frac{d}{dt}\Delta_t = \big(v_\theta(x_t,t) - v_\theta(y_t,t)\big) + \big(\mathcal{T}(y_t) - \widehat{\mathcal{T}}(x_t)\big).
\]
The first term is bounded by the Lipschitz continuity of $v_\theta$:
\[
\|v_\theta(x_t,t) - v_\theta(y_t,t)\| \leq L\|\Delta_t\|.
\]
The second term is bounded by the FNO approximation error:
\[
\|\mathcal{T}(y_t) - \widehat{\mathcal{T}}(x_t)\| \leq \varepsilon_{\mathrm{FNO}} + L_f\|\Delta_t\|.
\]
Thus, we have
\[
\frac{d}{dt}\|\Delta_t\| \leq (L+L_f)\|\Delta_t\| + \varepsilon_{\mathrm{FNO}}.
\]
Applying Grönwall's inequality yields
\[
\|\Delta_t\| \leq \|x_0-y_0\| e^{(L+L_f)t} + \frac{\varepsilon_{\mathrm{FNO}}}{L+L_f}\big(e^{(L+L_f)t}-1\big).
\]
Here, $\|x_0-y_0\|$ represents the initial deviation; the error grows exponentially with time, while the FNO approximation introduces an explicitly bounded term. \qed
\end{proof}

\end{document}